\newcommand{\argmax}{\operatornamewithlimits{argmax}}
\def\bR{{\mathbb R}}
\def\NPDF{{\mathcal N}}
\def\f0{{\mathbf 0}}
\def\vect{{\operatorname{vec}}}
\newtheorem{thm}{Theorem}
\newtheorem{prop}[thm]{Proposition}
\theoremstyle{definition}
\begin{document}

\title{Matrix Factorisation with Linear Filters}

\author{\"Omer Deniz Aky{\i}ld{\i}z
\thanks{The author is with Bogazici University. Email: odakyildiz@gmail.com}}

\markboth{}%
{Shell \MakeLowercase{\textit{et al.}}: Bare Demo of IEEEtran.cls for Journals}

\maketitle

\begin{abstract}
This text investigates relations between two well-known family of algorithms, matrix factorisations and recursive linear filters, by describing a probabilistic model in which approximate inference corresponds to a matrix factorisation algorithm. Using the probabilistic model, we derive a matrix factorisation algorithm as a recursive linear filter. More precisely, we derive a matrix-variate recursive linear filter in order to perform efficient inference in high dimensions. We also show that it is possible to interpret our algorithm as a nontrivial stochastic gradient algorithm. Demonstrations and comparisons on an image restoration task are given.
\end{abstract}

\begin{IEEEkeywords}
Matrix factorisation, Recursive least squares, Kalman filtering.
\end{IEEEkeywords}

%
\IEEEpeerreviewmaketitle

\section{Introduction}
\IEEEPARstart{M}{atrix} factorisation algorithms are one of the cornerstones of modern signal processing, machine learning, and, more generally, computational linear algebra. Formally the problem can be stated as factorising a data matrix $Y\in \bR^{m\times n}$ as,
\begin{align}
Y \approx CX
\end{align}
where $C\in \bR^{m\times r}$ is the \textit{dictionary matrix}, and columns of $X \in \bR^{r\times n}$ are \textit{coefficients}, and $r$ is the approximation rank. In our setup, all matrices will be real-valued. We are interested in to solve this problem in a \textit{recursive} way, i.e. using a single data vector at each time to update factors.

This is a well-known and well-studied problem. On the matrix factorisation side, the following works are related to our work. In \cite{YildirimSMCNMF}, authors proposed a sequential Monte Carlo based nonnegative matrix factorisation (NMF) algorithm using a similar model to original probabilistic interpretation of NMF \cite{cemgil09-nmf}. The model proposed in \cite{YildirimSMCNMF} is defined over columns of $X$, and $C$ is regarded as a static but unknown variable, so estimated via maximum-likelihood techniques. In \cite{dynamicmatrixfact}, authors propose a dynamic matrix factorisation with collaborative filtering applications in mind. In \cite{sismanisSGD}, authors propose a matrix factorisation algorithm based on stochastic gradient descent (SGD). In our context, it can be applied column-wise. In \cite{mairal2010online}, authors derive an online dictionary learning algorithm which is also related to SGD but they also impose sparsity assumptions on coefficients. An approach based on recursive least squares (RLS) can be found in \cite{RLSdictlearn}. Also there is more recent work on dictionary learning based on RLS \cite{zhang2014analysis}. These RLS-based approaches factorises the dictionary matrix (e.g. $C = AB$) and update each of these factors accordingly. Moreover these RLS-based works focus on supervised learning of dictionaries whereas here we are interested in unsupervised learning and applications without training phase (such as unsupervised image restoration).

Update rules given in these papers are different than what we obtain (we do not update dictionary matrix by factoring it), and more importantly, we depart from a certain probabilistic model (instead of a \textit{cost function}) and derive the update rules as explicit inference rules. We derive matrix factorisation algorithms as approximate \textit{matrix-variate filtering} algorithms in probabilistic models. The most related works to ours are \cite{hennig2013quasi} and \cite{hennig2015probabilistic} in which authors derive matrix-variate update rules for Hessian matrices as analytic inference rules in probabilistic models to obtain quasi-Newton algorithms from a probabilistic perspective. We follow the exact same approach, and our derivations follow those works. The model defined in \cite{hennig2013quasi} and ours are slightly different as \cite{hennig2013quasi} uses a model over square and symmetric matrices, but we define the model for non-square dictionary matrices (so we re-derive the update rules), and also the model definitions are slightly different. And finally, we apply these ideas to matrix factorisation problem. The provided probabilistic characterisation opens up many possibilities for incorporating further prior knowledge, or dealing with nonstationary data in a principled way by putting dynamics on the dictionary matrix.

In the following subsection, we'll give some identities which will be very useful in proofs. In Section~\ref{SecProbModel}, we describe our generative model for matrix factorisation. In Section~\ref{SecEstAndInf}, we derive our algorithm as an estimation and inference algorithm in the probabilistic model described in the Section~\ref{SecProbModel}. In Section~\ref{SecSGDcomp}, we describe the relation between SGD based matrix factorisation, and our algorithm. In Section~\ref{SecExp}, we demonstrate our algorithm on an image restoration task. In Section~\ref{SecConc}, we conclude.
\subsection{Some useful linear algebra}
We will be heavily using the following identities from \cite{harville1997matrix}, \cite{matrixcookbook} in this paper. Let $A$ is of dimension ${m\times r}$ and $B$ is of dimension ${r \times n}$. Then the following holds,
\begin{align}\label{generalTrick}
\vect(AXB) = (B^\top \otimes A)\vect(X)
\end{align}
A particular case where this identity will be useful for us is when $\dim(A) = m\times r$ and $\dim(B) = r\times 1$. So let us note the particular case in a more useful form to us,
\begin{align}\label{vecTrick2}
(x^\top \otimes I_m) \vect(A) = \vect(Ax) = Ax.
\end{align}
where $Ax$ is also a vector, $\dim(x) = r\times 1$, and $I_m$ is $m\times m$ matrix. For a matrix $M$ where $\dim(M) = m \times r$, let $m = \vect(M)$ is a $mr\times 1$ vector. To revert this operation, we define the reshaping operator: $\vect^{-1}_{m\times r}(m) = M$. Kronecker products also have the following mixed product property,
\begin{align}\label{mixedProduct}
(A \otimes B)(C \otimes D) = (AC) \otimes (BD),
\end{align}
and the following ``inversion" property,
\begin{align}\label{invKron}
(A\otimes B)^{-1} = A^{-1} \otimes B^{-1}.
\end{align}
\section{The Probabilistic Model}\label{SecProbModel}
Let $Y \in \mathbb{R}^{m\times n}$ be the data matrix, and $C \in \mathbb{R}^{m\times r}$ and $X \in \mathbb{R}^{r\times n}$. Let us denote the $i$'th column of the data matrix $Y$ with $Y(:,i)$, and $[n] = \{1,\ldots,n\}$. The observations are generated in the following way: At time $k$, we randomly sample an index $i_k \sim [n]$. And we set $y_k = Y(:,i_k)$. So $y_k$ denotes the observation at time $k$ but not $k$'th column. Similarly, the associated column of $X$ is denoted with $x_k$, and $x_k = X(:,i_k)$. For example if $i_k = 2$, then $y_k$ would be the second column of $Y$, and $x_k$ would be the second column of $X$. We denote the dictionary matrix with $C$, and $c = \vect(C)$. This also holds for $c_k = \vect(C_k)$ where $C_k$ is $m\times r$ matrix stands for estimate of $C$ at iteration $k$.

In this work, we consider the following probabilistic model,
\begin{align}
p(c) &= \NPDF(c; c_0, V_0 \otimes I_m) \label{priorC} \\
p(y_k|c,x_k) &= \NPDF(y_k; (x_k^\top \otimes I_m) c, \lambda \otimes I_m) \label{obsVectorVal}
\end{align}
Note that $x_k$ is a static unknown model parameter vector. On the other hand, $c$ and $y_k$ are random vectors, and treated as such. To motivate the model, notice that using identity \eqref{vecTrick2} for $(x_k^\top \otimes I_m)c$, we can rewrite the likelihood \eqref{LikelihoodCform1} in the following form,
\begin{align}
p(y_k | c, x_k) &= \NPDF(y_k; C x_k, \lambda \otimes I_m). \label{LikelihoodCform1}
\end{align}
In the matrix factorisation setup, we would like to assume $y_k \approx C x_k$ for each $k$, here this corresponds to assuming Gaussian noise. Using the model \eqref{priorC} and \eqref{obsVectorVal}, we would like to estimate both $x_k$ and $C$ given the observations $y_{1:k}$, i.e. observations up to time $k$.
\section{Parameter Estimation and Inference}\label{SecEstAndInf}
From the viewpoint of probabilistic (or Bayesian) inference, coefficients $x_k$ are \textit{static} parameters to be estimated (typically by some optimisation formulation), and in contrast, the dictionary matrix $C$ is a \textit{latent} variable that is to be inferred through its posterior distribution. In this section, we'll show how to perform parameter estimation for coefficients, and inference for the dictionary matrix.
\subsection{Parameter estimation: Finding coefficients}
To estimate the parameters $x_k$ associated with a given observation $y_k$, we  formulate the following maximisation problem,
\begin{align}
x_k^* = \argmax_{x_k} p(y_k | c_{k-1},x_k)
\end{align}
Since this density is a Gaussian with mean $C_{k-1} x_k$, the solution is the pseudoinverse,
\begin{align}
x_k^* = (C_{k-1}^\top C_{k-1})^{-1} C_{k-1}^\top y_k.
\end{align}
Note that in this work, we use this update rule in the experiments. However, just to note, a very intriguing approach would be maximising the marginal likelihood $p(y_k|y_{1:k-1},x_k)$ by integrating out $c$. Unfortunately, the optimisation part is intractable and we will discuss this elsewhere\footnote{See the discussion at \url{http://almoststochastic.com}.}.
\subsection{Inference: Finding the dictionary matrix}
In this subsection, we assume $x_k$ is fixed and $x_k = x_k^*$, and we suppress $x_k$ from the notation. We consider the model \eqref{priorC} and \eqref{obsVectorVal}, and solve the posterior inference problem. We can rewrite this model in a generic way,
\begin{align*}
p(c) &= \NPDF(c;c_0,P_0), \\
p(y_k|c) &= \NPDF(y_k; H_k c, R),
\end{align*}
where $P_0 = V_0 \otimes I$ and $R = \lambda \otimes I$. Since we fix $x_k$ for all $k$, we suppress the $x_k$ from the notation, and use generic $H_k$ observation matrix which is assumed to be known now. Given this model and fixed parameters, it is well-known that given observations up to time $k$, the posterior distribution $p(c|y_{1:k})$ is Gaussian too \cite{sarkka2013bayesian}. We denote this posterior density by $p(c|y_{1:k}) = \NPDF(c;c_k,P_k)$. The mean $c_k$ and covariance $P_k$ can be found by a recursive least squares filter (recursive linear filter) algorithm. Given observations $y_{1:k}$, the mean $c_k$ is given by \cite{sarkka2013bayesian},
\begin{align*}
c_{k} =c_{k-1} + P_{k-1} H_k^\top (H_k P_{k-1} H_k^\top + R_k)^{-1} (y_k - H_k c_{k-1}),
\end{align*}
and the covariance of the posterior is given by,
\begin{align*}
P_{k} = P_{k-1} - P_{k-1} H_k^\top (H_k P_{k-1} H_k^\top + R)^{-1} H_k P_{k-1}.
\end{align*}
Implementing these update rules would be very inefficient as $c \in \bR^{mr}$ might be a very high-dimensional vector. This requires to store a huge observation matrix $H_k$ and a huge covariance matrix $P_k$ which can easily become an impractical problem to solve. But fortunately we can obtain a very efficient matrix-variate update rule using the following proposition.
\begin{prop}\label{PropMean} The posterior mean $c_k$ which is given by,
\begin{align*}
c_{k} =c_{k-1} + P_{k-1} H_k^\top (H_k P_{k-1} H_k^\top + R_k)^{-1} (y_k - H_k c_{k-1}),
\end{align*}
can be rewritten as,
\begin{align}\label{FullPosteriorMean}
C_k = C_{k-1} + \frac{(y_k - C_{k-1} x_k) x_k^\top V_{k-1}^\top}{x_k^\top V_{k-1} x_k + \lambda}.
\end{align}
\end{prop}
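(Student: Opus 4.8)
The plan is to substitute the specific structure of the RLS model for matrix factorisation — namely $H_k = x_k^\top \otimes I_m$, $P_{k-1} = V_{k-1} \otimes I_m$, and $R_k = \lambda \otimes I_m$ — into the generic Kalman/RLS mean update and then simplify the resulting Kronecker-product expressions term by term using the identities collected in the introduction. First I would rewrite each factor appearing in the update. Using \eqref{vecTrick2}, the prediction residual becomes $y_k - H_k c_{k-1} = y_k - (x_k^\top \otimes I_m)\vect(C_{k-1}) = y_k - C_{k-1}x_k$. Next, using the mixed-product property \eqref{mixedProduct}, I would compute $H_k P_{k-1} = (x_k^\top \otimes I_m)(V_{k-1} \otimes I_m) = (x_k^\top V_{k-1}) \otimes I_m$, and then $H_k P_{k-1} H_k^\top = (x_k^\top V_{k-1}) \otimes I_m)(x_k \otimes I_m) = (x_k^\top V_{k-1} x_k) \otimes I_m$, which — since $x_k^\top V_{k-1} x_k$ is a scalar — is just $(x_k^\top V_{k-1} x_k) I_m$. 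Adding $R_k = \lambda I_m$ gives $(x_k^\top V_{k-1}x_k + \lambda) I_m$, whose inverse is simply $\tfrac{1}{x_k^\top V_{k-1} x_k + \lambda} I_m$ by \eqref{invKron}.

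Then I would assemble $P_{k-1}H_k^\top = (V_{k-1}\otimes I_m)(x_k \otimes I_m) = (V_{k-1}x_k)\otimes I_m$, so that the full gain term is
\begin{align*}
P_{k-1}H_k^\top (H_k P_{k-1}H_k^\top + R_k)^{-1} = \frac{(V_{k-1}x_k)\otimes I_m}{x_k^\top V_{k-1}x_k + \lambda}.
\end{align*}
Multiplying this against the residual vector $(y_k - C_{k-1}x_k)$ and applying \eqref{vecTrick2} once more — reading $(V_{k-1}x_k)\otimes I_m$ as $((V_{k-1}x_k)^\top \otimes I_m)^\top$ acting appropriately, or more directly computing $((V_{k-1}x_k)\otimes I_m)(y_k - C_{k-1}x_k)$ as the vectorisation of an outer product — yields $\vect\big((y_k - C_{k-1}x_k)(V_{k-1}x_k)^\top\big) = \vect\big((y_k - C_{k-1}x_k)x_k^\top V_{k-1}^\top\big)$. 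Finally, applying $\vect^{-1}_{m\times r}$ to the entire identity $c_k = c_{k-1} + (\cdots)$ converts $c_{k-1}$ back to $C_{k-1}$ and the correction term to $\frac{(y_k - C_{k-1}x_k)x_k^\top V_{k-1}^\top}{x_k^\top V_{k-1}x_k + \lambda}$, which is exactly \eqref{FullPosteriorMean}.

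The step I expect to require the most care is the very last one: correctly turning the vector expression $((V_{k-1}x_k)\otimes I_m)\,v$ for a vector $v \in \bR^m$ into a matrix via $\vect^{-1}_{m\times r}$. One must be careful about the orientation of the Kronecker factor — here $V_{k-1}x_k$ is an $r\times 1$ column, so $(V_{k-1}x_k)\otimes I_m$ is $mr \times m$, and its product with $v$ should be recognised as $\vect(v\,(V_{k-1}x_k)^\top)$ by the same pattern as \eqref{vecTrick2} (with the roles of the "$A$" matrix and the "$x$" vector suitably identified). Getting the transpose placement right so that $x_k^\top V_{k-1}^\top$ (rather than $V_{k-1}x_k$ transposed the other way) appears in the numerator is the one place a sign- or orientation-type slip could creep in; everything else is a mechanical cascade of \eqref{mixedProduct} and \eqref{invKron}. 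I would also briefly note the implicit symmetry assumption that $V_{k-1}$ being a covariance-type matrix means $V_{k-1}^\top = V_{k-1}$, though the statement as written keeps the transpose explicit so this is not strictly needed.
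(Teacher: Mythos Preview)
Your proposal is correct and follows essentially the same approach as the paper: substitute the Kronecker forms of $P_{k-1}$, $H_k$, and $R_k$, collapse everything with the mixed-product and inversion identities to reach the gain $\big((V_{k-1}x_k)\otimes I_m\big)/(x_k^\top V_{k-1}x_k+\lambda)$, and then use \eqref{generalTrick} plus $\vect^{-1}_{m\times r}$ to obtain \eqref{FullPosteriorMean}. Your write-up is in fact more explicit than the paper's, and your caution about the last reshaping step is well placed --- that is precisely where \eqref{generalTrick} is invoked with $A=I_m$, $B^\top=V_{k-1}x_k$, $X=y_k-C_{k-1}x_k$.
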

\begin{proof}
We put $P_{k-1} = V_{k-1} \otimes I_m$ (see Prop.~\ref{PropCov} to see this form holds for all $k$) and $H_k = x_k^\top \otimes I_m$ and $R_k = \lambda \otimes I_m$, and arrive,
\begin{align*}
c_{k} =\,\,&c_{k-1} + (V_{k-1} \otimes I_m) (x_k \otimes I_m) \\ &\left((x_k^\top \otimes I_m) (V_{k-1} \otimes I_m) (x_k \otimes I_m) + \lambda \otimes I_m \right)^{-1} \times \\&(y_k - (x_k^\top \otimes I_m) c_{k-1}),
\end{align*}
Using the mixed product property \eqref{mixedProduct} three times, using \eqref{invKron}, and finally using \eqref{vecTrick2} for the last term, one can arrive,
\begin{align*}
c_{k} = c_{k-1} + \underbrace{\left[\frac{V_{k-1} x_k}{x_k^\top V_{k-1} x_k + \lambda} \otimes I_m \right] (y_k - C_{k-1} x_k)}_{\textnormal{Use } \eqref{generalTrick}}
\end{align*}
Using \eqref{generalTrick} and reshaping with $\vect^{-1}_{m\times r}$, we obtain Eq.~\eqref{FullPosteriorMean}.
\end{proof}
One can recover classical Broyden's rule of quasi-Newton methods by setting $V_{k-1} = I$. It is already known that Broyden's rule and other quasi-Newton algorithms are recursive least squares regressors \cite{hennig2013quasi}, \cite{hennig2015probabilistic}. Thus, this is also a generalisation of a matrix factorization algorithm we proposed in our earlier work based on Broyden updates \cite{akyildiz2015online}. In the following proposition, we derive an efficient posterior covariance update to use in mean update \eqref{FullPosteriorMean}.
\begin{prop}\label{PropCov}
The posterior covariance update,
\begin{align*}
P_{k} = P_{k-1} - P_{k-1} H_k^\top (H_k P_{k-1} H_k^\top + R)^{-1} H_k P_{k-1},
\end{align*}
can be rewritten as,
\begin{align}\label{FullPosteriorCovariance}
P_{k} = \underbrace{\left(V_{k-1} - \frac{V_{k-1} x_k x_k^\top V_{k-1}}{x_k^\top V_{k-1} x_k + \lambda} \right)}_{V_{k}} \otimes I_m.
\end{align}
\end{prop}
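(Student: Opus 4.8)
The plan is to repeat the Kronecker-algebra manipulation used in the proof of Proposition~\ref{PropMean}, but organised as an induction on $k$, since the form $P_{k-1} = V_{k-1}\otimes I_m$ that was taken for granted there must itself be established. The base case is immediate: the prior \eqref{priorC} gives $P_0 = V_0\otimes I_m$. For the inductive step I would assume $P_{k-1} = V_{k-1}\otimes I_m$ and substitute this, together with $H_k = x_k^\top\otimes I_m$ (hence $H_k^\top = x_k\otimes I_m$) and $R = \lambda\otimes I_m$, into the standard covariance recursion in the statement.

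First I would reduce the inner factor $H_k P_{k-1}H_k^\top + R$. Two applications of the mixed product property \eqref{mixedProduct} give $H_k P_{k-1}H_k^\top = (x_k^\top V_{k-1}x_k)\otimes I_m$, and since $x_k^\top V_{k-1}x_k$ and $\lambda$ are scalars this equals $(x_k^\top V_{k-1}x_k + \lambda)I_m$, a scalar multiple of the identity, whose inverse (by \eqref{invKron}) is $\frac{1}{x_k^\top V_{k-1}x_k+\lambda}\otimes I_m$. Next I would expand the correction term $P_{k-1}H_k^\top (H_k P_{k-1}H_k^\top + R)^{-1} H_k P_{k-1}$ using \eqref{mixedProduct} repeatedly; the left-hand Kronecker factors collapse to $V_{k-1}x_k (x_k^\top V_{k-1}x_k+\lambda)^{-1} x_k^\top V_{k-1}$ and the right-hand ones to $I_m$, so the correction is $\frac{V_{k-1}x_k x_k^\top V_{k-1}}{x_k^\top V_{k-1}x_k+\lambda}\otimes I_m$. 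Since the Kronecker product distributes over addition, $P_k = V_{k-1}\otimes I_m - \frac{V_{k-1}x_k x_k^\top V_{k-1}}{x_k^\top V_{k-1}x_k+\lambda}\otimes I_m = \bigl(V_{k-1} - \frac{V_{k-1}x_k x_k^\top V_{k-1}}{x_k^\top V_{k-1}x_k+\lambda}\bigr)\otimes I_m$, which is exactly \eqref{FullPosteriorCovariance} and exhibits $V_k$, closing the induction.

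There is no genuine analytic difficulty here; the work is entirely bookkeeping with the Kronecker identities, just as in Proposition~\ref{PropMean}. The one point worth making explicit is that this proposition, proved by induction, is precisely what licenses the assumption $P_{k-1}=V_{k-1}\otimes I_m$ invoked in the proof of Proposition~\ref{PropMean}; in effect the two propositions share a single induction whose base case is the prior and whose inductive step is the computation above. A minor caveat is the tacit identification of the $1\times 1$ quantities $\lambda$ and $x_k^\top V_{k-1}x_k$ with scalars, so that $\lambda\otimes I_m=\lambda I_m$ and the inverse supplied by \eqref{invKron} is just the reciprocal; and implicitly one needs $x_k^\top V_{k-1}x_k+\lambda\neq 0$, which holds whenever $\lambda>0$ and $V_{k-1}$ is positive semidefinite (a property itself preserved along the recursion).
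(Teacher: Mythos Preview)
Your proof is correct and follows essentially the same route as the paper: substitute $H_k = x_k^\top\otimes I_m$, $R=\lambda\otimes I_m$, and the inductive hypothesis $P_{k-1}=V_{k-1}\otimes I_m$, then repeatedly apply the mixed product property \eqref{mixedProduct} and \eqref{invKron} to collapse everything to $V_k\otimes I_m$. Your presentation is slightly more explicit about the induction structure and the scalar/invertibility caveats, but the argument is the same.
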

\begin{proof} We start by putting $H_k = x_k^\top \otimes I_m$ and $R = \lambda \otimes I_m$. So we arrive,
\begin{align*}
P_{k} =&\,\, P_{k-1} - P_{k-1} (x_k\otimes I_m) ((x_k^\top \otimes I_m) P_{k-1} (x_k \otimes I_m)  + \\ 
&\lambda \otimes I_m)^{-1} (x_k^\top \otimes I_m)  P_{k-1}.
\end{align*}
We also put $P_{k-1} = V_{k-1} \otimes I_m$. We will show that this form holds also $P_k$, and since $P_0$ is also of this form, by induction, we arrive that it holds for all $k$. Let us put,
\begin{align*}
P_{k} =&\,\, (V_{k-1}\otimes I_m) - (V_{k-1}\otimes I_m) (x_k\otimes I_m) \times \\ &((x_k^\top \otimes I_m) (V_{k-1}\otimes I_m)(x_k \otimes I_m)  + \lambda \otimes I_m)^{-1} \times \\&(x_k^\top \otimes I_m) (V_{k-1}\otimes I_m).
\end{align*}
By using mixed product property \eqref{mixedProduct} several times, we obtain,
\begin{align*}
P_{k} =&\,\, (V_{k-1}\otimes I_m) - (V_{k-1}x_k \otimes I_m)\times \\ &((x_k^\top V_{k-1} x_k + \lambda)^{-1} \otimes I_m) (x_k^\top V_{k-1} \otimes I_m).
\end{align*}
where we also used property \eqref{invKron}. Few more uses of mixed product property leads to,
\begin{align*}
P_{k} =&\,\, (V_{k-1}\otimes I_m) - \frac{V_{k-1} x_k x_k^\top V_{k-1}}{x_k^\top V_{k-1} x_k + \lambda} \otimes I_m.
\end{align*}
Thus we can say that $P_k = V_k \otimes I_m$ where,
\begin{align}\label{VkUpdate}
V_k = V_{k-1} - \frac{V_{k-1} x_k x_k^\top V_{k-1}}{x_k^\top V_{k-1} x_k + \lambda}.
\end{align}
\end{proof}
We give the overall algorithm in Algorithm~\ref{MFRLF}. We name it as matrix factorisation based on recursive linear filter (MF-RLF).
\begin{algorithm}[t]
\begin{algorithmic}[1]
\caption{MF-RLF}\label{MFRLF}
\State Initialise $C_0$ randomly and set $k = 1$.
\Repeat
\State Pick $y_k = Y(:,i_k)$ where $i_k \sim [n]$ uniformly random.
\State Perform,
\begin{align*}
x_{k} &= (C_{k-1}^\top C_{k-1})^{-1} C_{k-1}^\top y_{k} \\
C_k &= C_{k-1} + \frac{(y_{k} - C_{k-1}x_{k})x_{k}^\top V_{k-1}}{\lambda + x_{k}^\top V_{k-1} x_{k}} \\
V_k &= V_{k-1} - \frac{V_{k-1} x_k x_k^\top V_{k-1}}{x_k^\top V_{k-1} x_k + \lambda}.
\end{align*}
\State $k \leftarrow k+1$
\Until{convergence}
\end{algorithmic}
\end{algorithm}
\subsection{A variation: Filtering the dictionary matrix}
We define a little modification of the model \eqref{priorC} and \eqref{obsVectorVal} and obtain a state-space model (SSM),
\begin{align*}
p(\tilde{c}_0) &= \NPDF(\tilde{c}_0; c_0,P_0) \\
p(\tilde{c}_k | \tilde{c}_{k-1}) &= \NPDF(\tilde{c}_k; \tilde{c}_{k-1}, Q_k) \\
p(y_k | \tilde{c}_k) &= \NPDF(y_k; \tilde{C}_k x_k, \lambda \otimes I_m)
\end{align*}
where now $\tilde{c}_k$ variables are latent variables, and $c_k$ is the posterior mean estimate of the $\tilde{c}_k$. All these quantities are again \textit{approximate} because all of them are conditioned on $X$ which is unknown, and to be estimated during the updates.

Deriving matrix-variate Kalman filtering recursions for this model is very similar to what we did in the previous section. Algorithmically, it is a simple modification to the Algorithm~\ref{MFRLF}. Define $Q_k = Q_V \otimes I_m$ where $Q_V$ is $r \times r$ covariance matrix. So to obtain the matrix-variate Kalman filter, it suffices to perform the following step just before step 4 of the Algorithm~\ref{MFRLF},
\begin{align*}
V_{k|k-1} = V_{k-1} + Q_V,
\end{align*}
and use $V_{k|k-1}$ for updating $C_k$ and $V_k$. We think that it could be very useful to develop an explicit model when one needs a ``forgetting" property in the dictionary. It can be a principled alternative to what is called ``forgetting factor" of the RLS when performing matrix factorisations. $Q_V$ can be actively used to add a dynamic to the dictionary matrix. We leave this potential application to the future work.
\begin{figure*}
\includegraphics[scale=0.22]{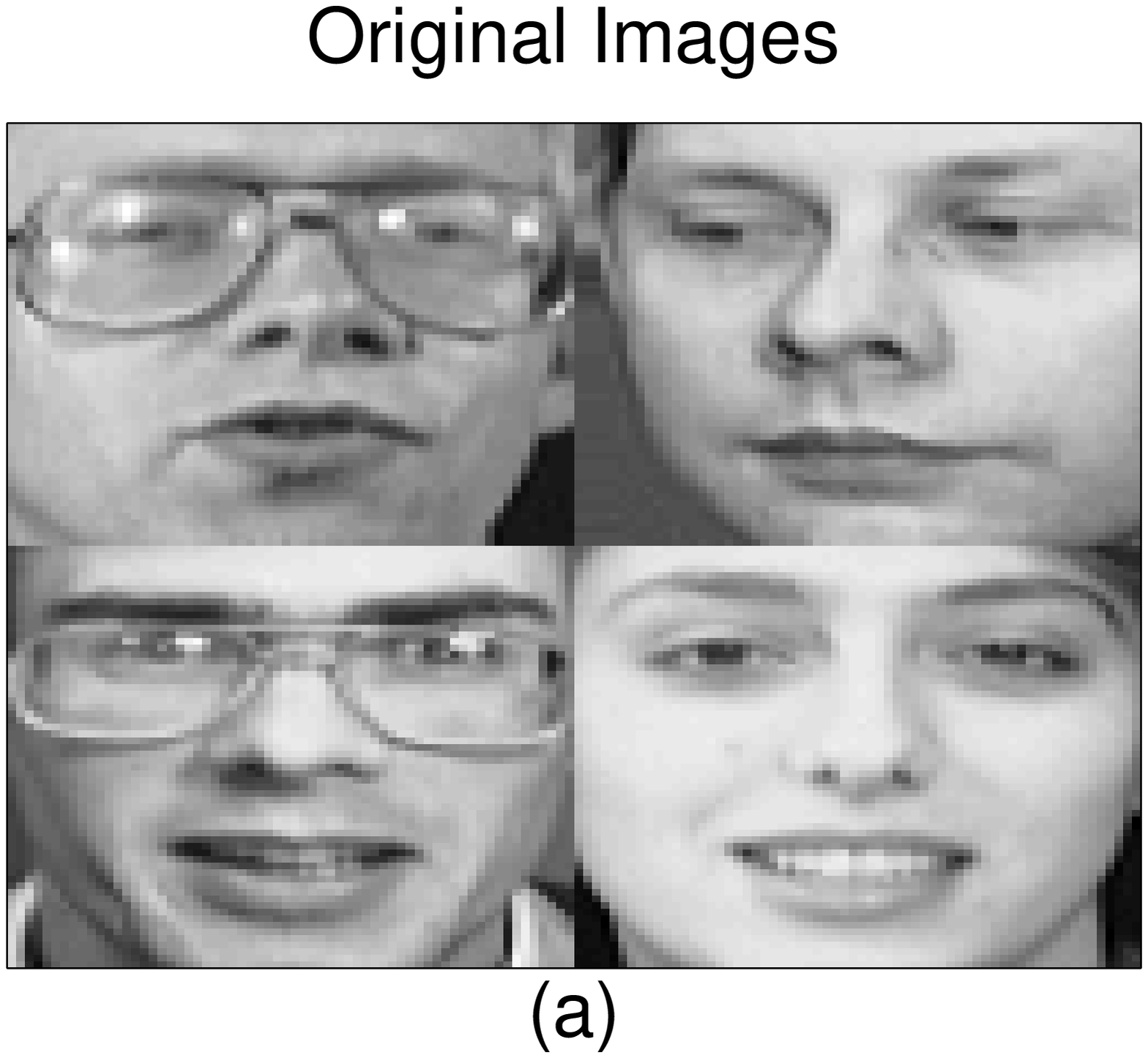}
\includegraphics[scale=0.22]{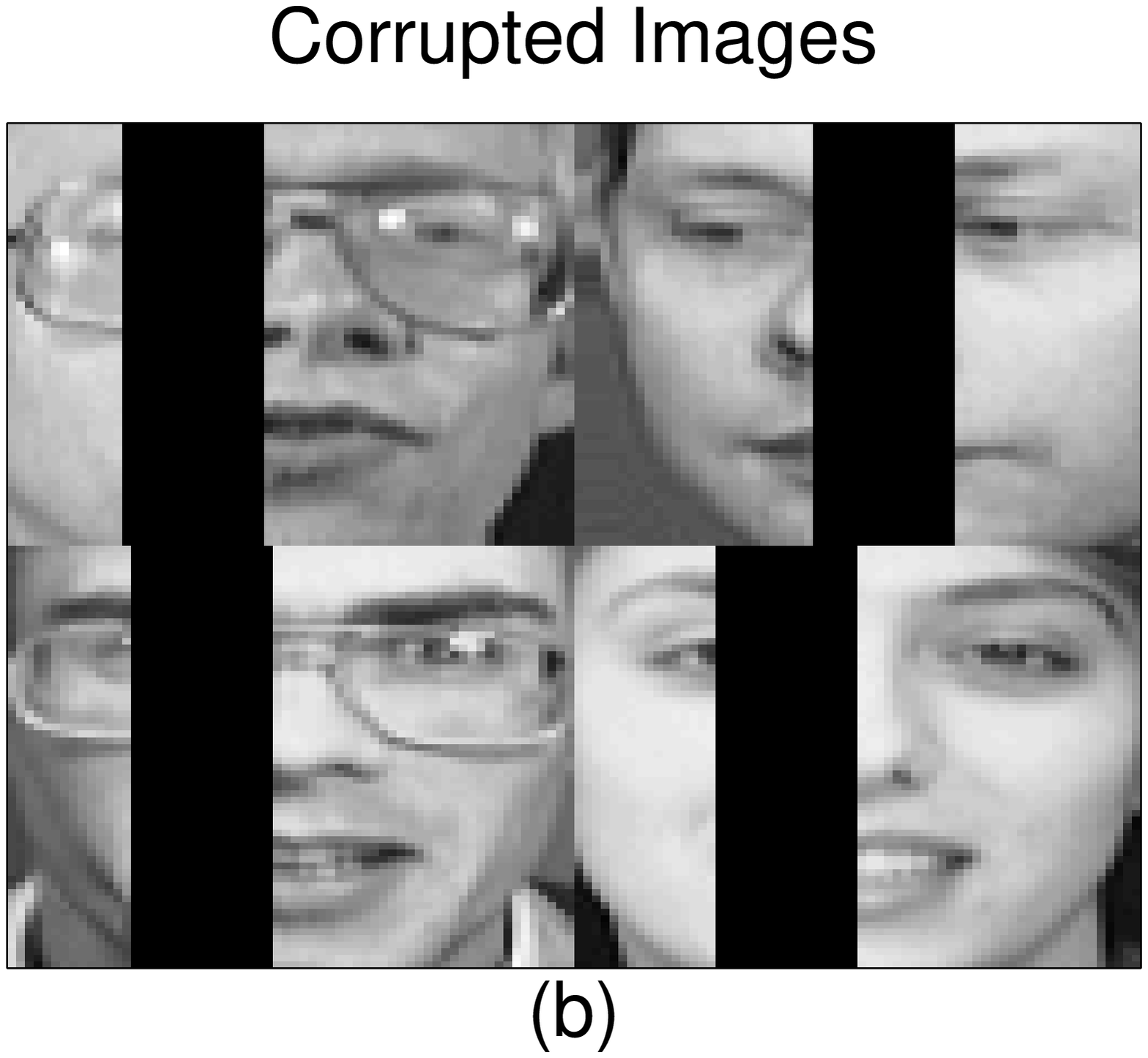}
\includegraphics[scale=0.22]{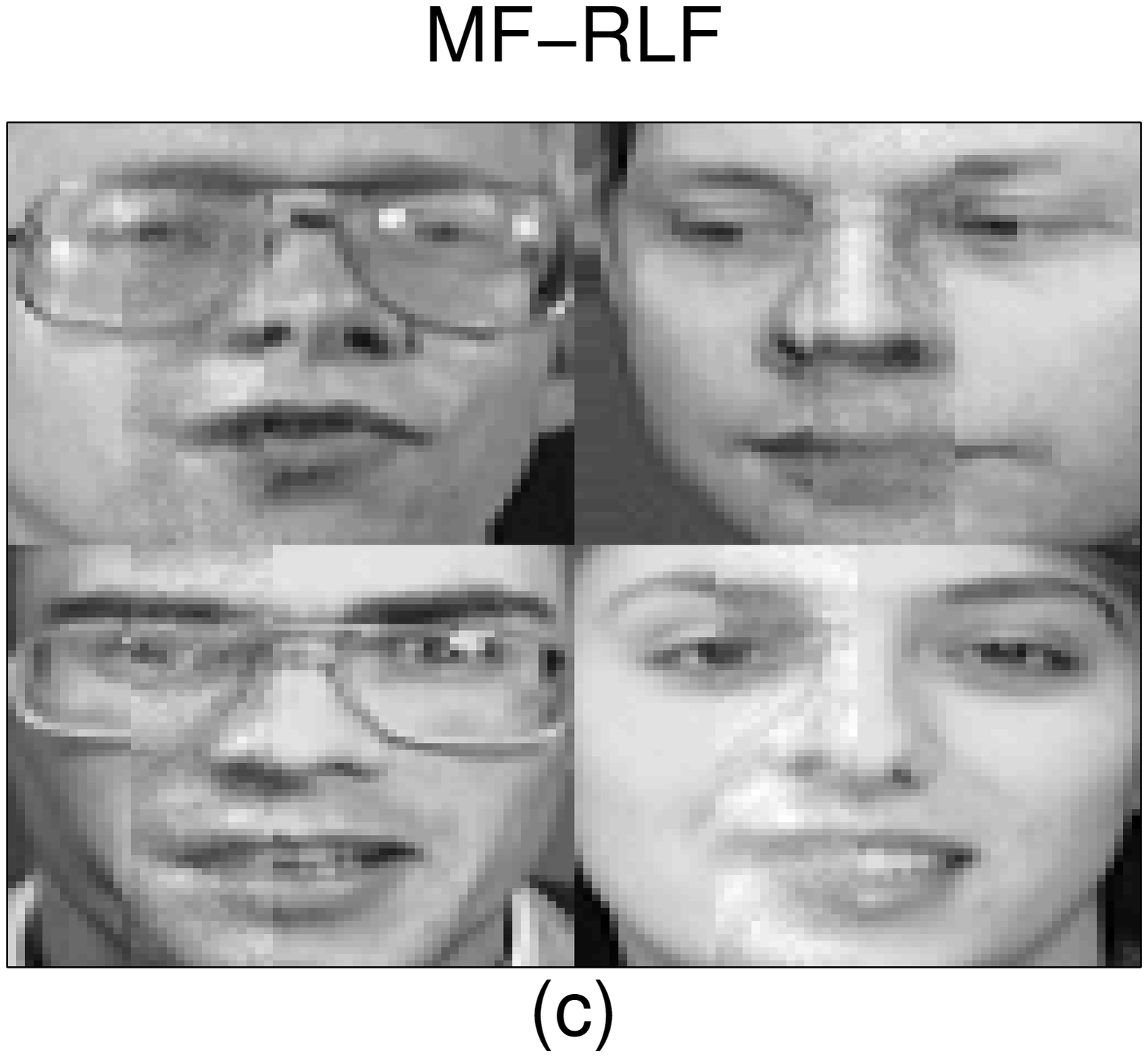}
\includegraphics[scale=0.22]{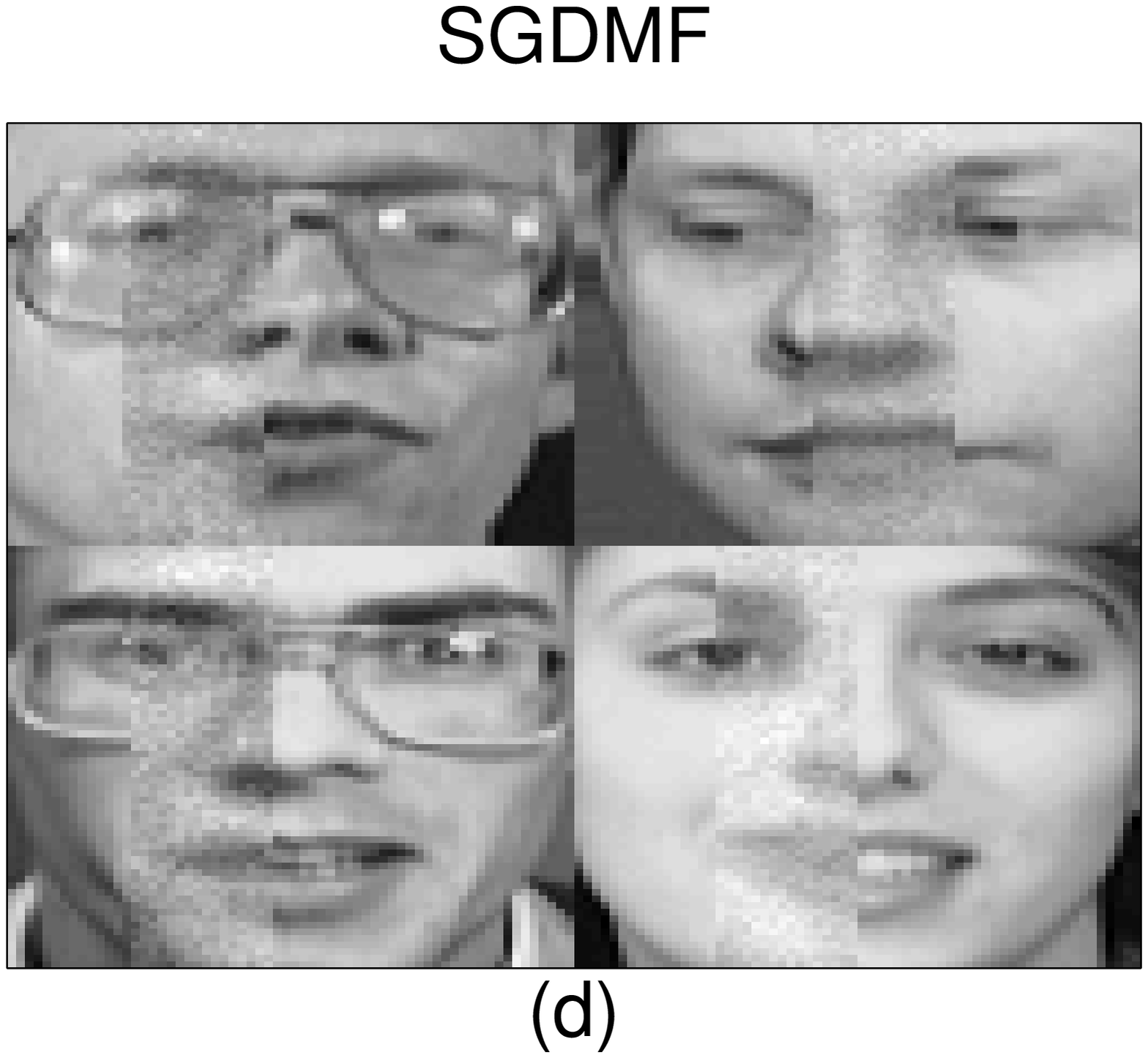}
\includegraphics[scale=0.22]{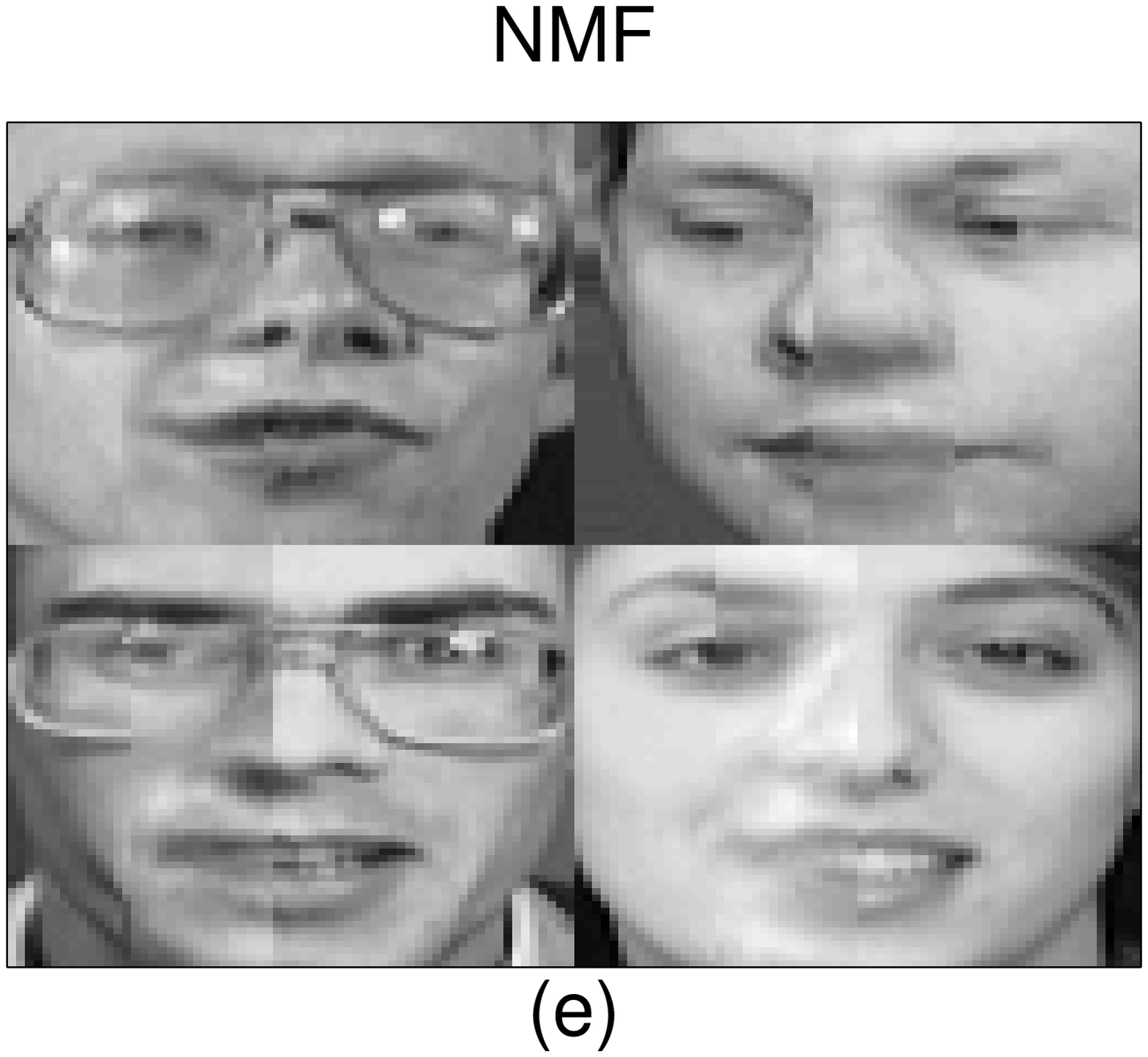}
\caption{Comparison of our algorithm with stochastic gradient descent MF (SGDMF), and nonnegative matrix factorization (NMF). SGDMF and MF-RLF passed 10 times over dataset recursively. NMF is run for 1000 batch iterations. (a) Some of original images, (b) We randomly removed \%25 batch of all columns (for all 400 faces). (c) The result of MF-RLF (Algorithm~\ref{MFRLF}). (d) Result of SGDMF. (e) Result of NMF. SNR values: MF-RLF: 12.38, NMF: 12.35, SGDMF: 11.75 where initial SNR: 0.68. This clearly shows our algorithm competes with online as well as the offline benchmark algorithms on a standard task.}
\label{MissingImFig}
\end{figure*}
\section{Relation to Stochastic Gradient Descent}\label{SecSGDcomp}
Our algorithm can be interpreted as a version of stochastic gradient descent with a nontrivial and non-scalar step size. The interpretation is given as follows: Suppose $y_k$ are iid draws conditioned on $C$ (estimating $X$ will be identical to previous case, so assume it is known), and we would like to maximise the following joint likelihood of the dataset,
\begin{align*}
p(Y | C, X) = \prod_{k=1}^n p(y_k | C, x_k),
\end{align*}
and assume this likelihood is defined as
\begin{align*}
p(y_k | C, x_k) = \NPDF(y_k; C x_k, I).
\end{align*}
Then after a bit of calculation, one can show that applying {SGD} to the negative log-likelihood results in the following iteration,
\begin{align}\label{sgdupdate}
C_k = C_{k-1} + \gamma_k (y_k - C_{k-1} x_k) x_k^\top.
\end{align}
First of all, putting $\gamma_k = 1/(\lambda + x_k^\top x_k)$ recovers Broyden's rule again from a different perspective: \textit{maximum likelihood estimation via SGD}\footnote{There are other ways, e.g. embedding step-size into the covariance. So this hints for an interesting connection between the step-size of the SGD and posterior covariance of the recursive linear-Gaussian models.}. But note that this does not ensure that the usual assumptions on the step-size is satisfied, hence the convergence is questionable \cite{Bottou98onlinelearning}. We note that, the update rule \eqref{FullPosteriorMean} that is proposed in this paper is different than \eqref{sgdupdate} as we also have a matrix $V_k$ which can not be embedded into the step-size in a trivial way.
\section{Application to Image Restoration}\label{SecExp}
We demonstrate our algorithm on an image restoration task on the Olivetti dataset \cite{cemgil09-nmf}. This dataset consists of $400$ face images of size $64\times 64$. We vectorise each face into a column vector with dimension $4096$, so $m = 4096$ in this problem. Since there are $400$ faces in the dataset, $n = 400$. We chose $r = 40$ as an approximation rank and $\lambda = 2$.  \begin{wraptable}{r}{3cm}
\caption{Table of SNR Values. Initial SNR: 0.68}\label{tableSnr}
\begin{tabular}{| l | l |}
    \hline
    Algorithm & SNR \\ \hline
    MF-RLF & 12.38 \\ \hline
    SGDMF & 11.75 \\ \hline
    NMF & 12.35 \\
    \hline
    \end{tabular} 
\end{wraptable}We initialised factors randomly without imposing any structure. We choose $V_0 = I$ for this particular dataset, other choices lead to poorer performance. But it is entirely up to user to encode a prior knowledge about dictionary by using covariance matrix $V_0$ that encodes a qualitative knowledge about the structure between $r$ columns of the dictionary matrix.

We deal with missing data using exact same methodology described in \cite{akyildiz2015online}. So we define a mask $M$, and denote the mask associated with $y_k$ with $m_k$. So in the Algorithm~\ref{MFRLF}, we replace $(y_k - C_{k-1} x_k)$ term by $m_k \odot (y_k - C_{k-1} x_k)$. Also while updating $C_k$, we construct a special mask,
\begin{align*}
M_{C_k} = \underbrace{[m_{k},\ldots, m_{k}]}_{r \textnormal{ times}},
\end{align*}
and apply the following update,
\begin{align*}
x_{k} =& ((M_{C_{k-1}} \odot C_{k-1})^\top (M_{C_{k-1}} \odot C_{k-1}))^{-1} \times \\
&(M_{C_{k-1}} \odot C_{k-1})^\top (m_{{k}}\odot y_{{k}}),
\end{align*}
in the Algorithm~\ref{MFRLF} for $x_k$. Note that all these reformulations can be derived from the model by putting masks into the model. We left them out for simplicity. For the more details of this missing data handling scheme see \cite{akyildiz2015online}. We use this both for SGD and MF-RLF.

We give comparisons with both {SGD} and {NMF}. We give a comparison with {NMF} because we think that \textit{the most basic task} of an online algorithm is to compete with the state-of-the-art batch methods. In general, many online algorithms fail at fulfilling this task because datasets which one can experiment batch algorithms are too small for online learning. In this section, we show that our algorithm fulfils this \textit{hard} task: It works as good as NMF --the standard batch benchmark-- on image restoration. As our algorithm bears some similarities with {SGD}, we also give a comparison with {SGD} as an online algorithm. The implementation is similar to ours -- the $C_k$ update \eqref{sgdupdate} subsequently followed by pseudoinverse. The visual results can be seen from Fig.~\ref{MissingImFig}, and SNR values are tabulated in Table~\ref{tableSnr}. MF-RLF and SGDMF passed recursively 10 times over the dataset, i.e. using a single observation each iteration. We ran NMF with 1000 batch passes over data. This shows these recursive algorithms uses data much more efficiently.

Results show that our algorithm works well perceptually, and achieve same SNR values with NMF although it only passed 10 times over the dataset.
\section{Conclusion}\label{SecConc}
We presented a matrix factorisation algorithm which makes use of linear filters. We recast the factorisation problem into the linear filtering problem, and propose efficient matrix-variate update rules for the Gaussian posterior summary statistics. The algorithm can trivially be extended for dynamic models on dictionary matrix where one can model changing nature of the dataset in a principled way. For the future work, we think to extend this filtering approach to nonlinear and non-Gaussian state space models where the model structure can be much more richer than linear models. Putting a nonlinear dynamics on $x_k$ poses new challenges for sequential inference schemes in high-dimensions, and calls for Rao-Blackwellisation of state-of-the-art algorithms (such as \cite{thomasNested}) proposed for high-dimensional filtering. Another potential use of our algorithm can be based on uncertainty estimates: Covariance uncertainty can be used to stop unnecessary computations, and save enormous time in a related fashion to probabilistic numerics \cite{hennig2015probabilisticnumerics}. We hope to pursue different methodological and application based directions for the future.
\section*{Acknowledgements}
I am grateful to Philipp Hennig for very valuable discussions. I am thankful to A. Taylan Cemgil for his support and suggestions. I am thankful to Baris Evrim Demiroz and Thomas Sch\"on for discussions.  This work is supported by TUBITAK under the grant number 113M492 (PAVERA).
\ifCLASSOPTIONcaptionsoff
  \newpage
\fi
\bibliographystyle{IEEEtran}
\bibliography{../../../../CommonLatexFiles/draft}
\end{document}